
\documentclass[letterpaper, 10 pt, conference]{ieeeconf}  

\IEEEoverridecommandlockouts                              

\overrideIEEEmargins                                      


\usepackage{amssymb}
\usepackage{verbatim}
\usepackage{amsmath}
\usepackage{graphicx}
\usepackage{wrapfig}
\usepackage{color}
\usepackage[titlenumbered,ruled]{algorithm2e}
\usepackage{url}
\usepackage{subcaption}
\usepackage{caption}
\captionsetup[subfigure]{labelformat=simple,labelsep=colon}

\captionsetup[figure]{font=small,labelfont=small}

\newtheorem{asm}{Assumption}

\newtheorem{dfn}{Definition}
\newtheorem{prp}{Proposition}
\newtheorem{crl}{Corollary}

\newtheorem{obs}{Observation}

\DeclareMathOperator{\dist}{d}
\DeclareMathOperator{\rot}{R}
\DeclareMathOperator{\dsign}{d_{s}}
\DeclareMathOperator{\D}{D}
\DeclareMathOperator{\diam}{diam}
\newcommand{\R}{\mathbb{R}}

\pdfoutput=1

\title{\LARGE \bf
A Decomposition-Based Approach to Reasoning \\ about Free Space Path-Connectivity for Rigid Objects in 2D
}

\author{Anastasiia Varava, J. Frederico Carvalho, Danica Kragic, and Florian T. Pokorny
\thanks{The authors are with the Robotics, Perception, and Learning Lab, School of Computer Science and Communication, KTH The Royal Institute of Technology, 100 44 Stockholm, Sweden 
        {\tt\small \{varava, jfpbdc, dani, fpokorny\}@kth.se}}}

\begin{document}

\maketitle
\thispagestyle{empty}
\pagestyle{empty}

\begin{abstract}
In this paper, we compute a conservative approximation of the path-connected components of the free space of a rigid object in a 2D workspace in order to solve two closely related problems: {\it (i)} to determine whether there exists a collision-free path between two given configurations, and {\it (ii)} to verify whether an object can escape arbitrarily far from its initial configuration -- i.e., whether the object is {\it caged}. Furthermore, we consider two quantitative characteristics of the free space: the volume of path-connected components and the width of narrow passages. To address these problems, we decompose the configuration space into a set of two-dimensional slices, approximate them as two-dimensional alpha-complexes,  and then study the relations between them. This significantly reduces the computational complexity compared to a direct approximation of the free space. We implement our algorithm and run experiments in a three-dimensional configuration space of a simple object showing runtime of less than 2 seconds.
\end{abstract}




\section{Introduction}
\label{intro}
Understanding the global topological and geometric properties of the free space is important for both robotic manipulation and motion planning. 

In manipulation, the mobility of an object may be constrained by manipulators and/or obstacles. Here, it is crucial to understand the object's free space to quantify how far it can move from its initial configuration. When the object cannot escape arbitrarily far, we in particular say that the object is {\it caged}. Formally, this means that it is located in a compact path-connected component of its free space. Caging can be applied to robotic grasping and multi-agent manipulation.

One of the biggest challenges in caging is verification -- i.e., designing efficient algorithms providing theoretical guarantees that a given configuration is a caging configuration. To prove that an object is located in a bounded path-connected component of its free space requires knowledge about the entire configuration space, which is high dimensional even in the case of rigid objects (three-dimensional when the workspace has only two dimensions). This makes direct reconstruction of the free space computationally expensive. Another approach towards verifying caging relies on particular geometric and topological features of the object under consideration. These algorithms can be computationally efficient, but they are limited to particular classes of objects' shapes. 

\begin{figure}[htb!]   
\label{fig::workspace}
 
\center{\includegraphics[width=0.5\textwidth]{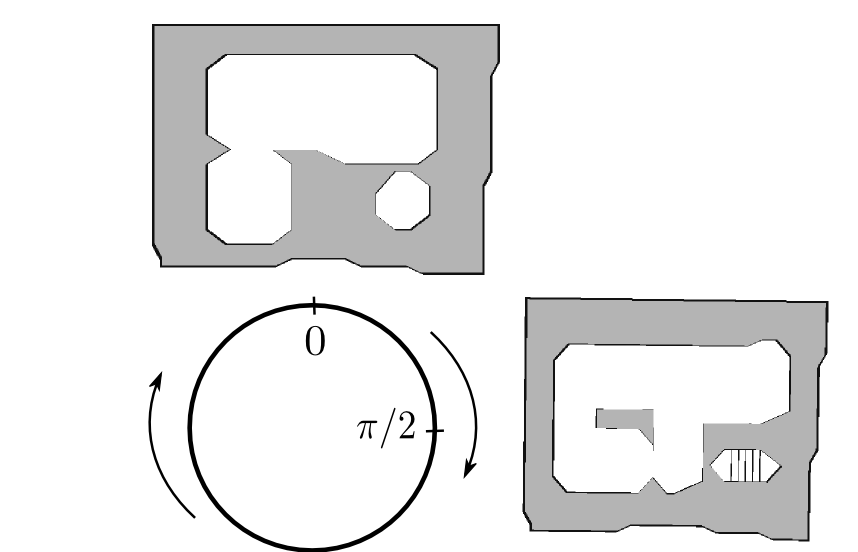}}
  \caption{We approximate the collision space of an object by  choosing a finite set of fixed object's orientations and considering the corresponding projections of the collision space to $\R^2$. This figure shows the projections corresponding to orientations $0$ and $\pi/2$. The corresponding workspace can be found in Sec.\ref{experiments}, the object is a horizontal bar.}
\label{fig::teaser}  	
\end{figure}

In contrast, the only assumption we make about the shape of the object is that it should not have `thin parts', see Sec.\ref{definitions} for details.
We also address the problem of proving path non-existence between a pair of given configurations. To prove that there is no path, one has to show that two configurations belong to different path-connected components.

Apart from the purely topological property of being located in a certain path-connected component, it is also important to quantitatively describe the amount of freedom the object possesses in a certain configuration. Indeed, assume that our task is to cage an object and move it to a certain position. The more mobility the object has within the cage, the less precision we can achieve trying to put it into the goal position. Therefore, we would prefer those caging configurations where the object's mobility is more restricted. To distinguish between different cages, we compute the volume of path-connected components.

To show that two configurations are disconnected, we construct an approximation of the object's collision space. Intuitively, our approximation is a set of projections of a subset of the object to planes with fixed orientations, see Fig.\ref{fig::teaser}. By construction, our approximation is a proper subset of the real collision space, which means that if our algorithm reports that the two configurations are disconnected, then there is no path between them in the real free space. However, for the same reason our algorithm is not guaranteed to find all the possible caging configurations, since we do not reconstruct the entire collision space.

In \cite{varava_2}, we presented the general idea of our approach. While in this paper, we present an elaborate theoretical framework, prove the correctness of our method, implement the algorithm for two-dimensional workspaces, and report the experimental results. The core contributions of the present paper with respect to \cite{varava_2} can be summarized as follows:
{\it (i)} we provide a correctness proof of our algorithm (i.e., we show that if in our approximation of the free space two configurations are disconnected, then they are disconnected in the real free space).
{\it (ii)} We provide algorithms to compute lower bounds for volume of path-connected components and width of narrow passages.
{\it (iii)} We provide implementation details and run experiments in a two-dimensional workspace. 
{\it (iv)} We prove that if two configurations are disconnected, we can construct a good enough approximation of the free space to show that these configurations are either disconnected or connected by a narrow passage.

\section{Motivation and Related Work}
\label{related_work}

 In grasping, caging can be considered as an alternative to a classical grasp \cite{makita, makita2, pokorny, varava}, as well as an intermediate step on the way towards a firm grasp \cite{rodriguez}. Unlike classical grasping, caging can be formulated as a purely geometric problem, and therefore one can derive sufficient conditions for an object to be caged. Apart from that, caging deals with global geometric features of the object, which may in applications be more robust to noise than local geometry. To prove that a rigid object is caged it is enough to prove this for any subset (part) of the object. This allows a large enough subset of the object lying strictly inside the real object to be considered. This makes caging robust to noise and uncertainties appearing in the process of shape reconstruction and position detection.
The notion of a planar cage was initially introduced by Kuperberg in 1990 \cite{kuperberg} as a set of $n$ coplanar points lying in the complement of a polygon and preventing it from escaping arbitrarily far from its initial position. In tobotics, it was subsequently studied in the context of point-based caging in 2D by Rimon and Blake \cite{rimon}, Pipattanasomporn and Sudsang \cite{sudsang_polygons},  Vahedi and van der Stappen \cite{vahedi}, and others.  A similar approach has also been adopted for caging 3D objects. For instance, Pipattanasomporn and Sudsang \cite{sudsang_polytopes} proposed an algorithm for computing all two-finger cages for non-convex polytopes.

In the above mentioned works fingertips are represented as points or spheres. Later, more complex shapes of caging tools were taken into account by Pokorny et al. \cite{pokorny, stork2013b, stork}, Varava \cite{varava}, Makita at al. \cite{makita, makita2}. In these works, sufficient conditions for caging were derived for objects with particular shape features.

In this paper, we address caging as a special case of path non-existence problem: an object is caged if there is no path leading it to an unbounded path-connected component.

The problem of proving path non-existence has been addressed by Basch et al. \cite{basch} in the context of motion planning, motivated by the fact most of the modern sampling-based planning algorithms do not guarantee that two configurations are disconnected, and rely on stopping heuristics in such situations \cite{latombe}.
Basch et al. prove that two configurations are disconnected when the object is `too big' or `too
long' to pass through a `gate' between them. In \cite{zhang}, Zhang et al. use approximate cell decomposition and prove path non-existence. They decompose a configuration space
into a set of cells and for each cell decide if it lies in the collision space. In \cite{mccarthy} McCarthy et al. propose a somewhat similar approach. There, they randomly sample the configuration space and reconstruct its approximation as an alpha complex. They later use it to check the connectivity between pairs of configurations. 

In this paper, we also aim to study path-connectivity of the free space of the object. Unlike \cite{mccarthy}, we do not construct the collision space directly. Instead we decompose it into a finite set of lower dimensional `slices'. This allows us to overcome the dimensionality problem without losing any necessary information about the topology of the configuration space.

\section{Definitions, Notation, and Overview}
\label{definitions}
Let us now provide the necessary definition and formulate our problem.
Since our work is related both to object manipulation and motion planning, we use the general term `object' without loss of generality when talking about both objects and autonomous rigid robots (e.g., disc robots) moving in two-dimensional workspaces.

\subsection{Definitions and Notation}
\begin{dfn}
\label{dfn::object}
A {\it rigid object} is a compact connected non-empty subset of $\mathbb{R}^2$.
A {\it set of obstacles}  is a compact  non-empty subset of $\mathbb{R}^2$.
\end{dfn}

We would like to limit ourselves to a large class of sufficiently `good' shapes: namely, we want both the object and the obstacles to be representable as a set of $2-$dimensional balls. Therefore, we do not allow them to have `thin parts'. To formalize this notion, we recall the definition of a regular set:

\begin{dfn}
\label{dfn::regular_set}
A set $U$ is {\it regular} if it is equal to the closure of its interior: 
$$U = \operatorname{cl}(\operatorname{int}(U))$$
\end{dfn}

In the above definition, the interior of $U$ is the largest open set contained in $U$, and the closure of $\operatorname{int}(U)$ is the smallest closed set containing $\operatorname{int}(U)$.

We make the following assumption:

\begin{asm}
Both the object and the obstacles  are regular sets.
\end{asm}

We approximate both the obstacles and the object as unions of balls lying in their interior, $\mathcal{S} = \{B_{R_1}(X_1), .., B_{R_n}(X_n)\}$  and $\mathcal{O} = \{O_{r_1}(Y_1), .., O_{r_m}(Y_m)\}$ of radii $R_1, .., R_n$ and $r_1, .., r_m$ respectively.

Let $\mathcal{C} = SE(2)$ denote the configuration space of the object. We define its {\it collision space} $\mathcal{C}^{col}$ as the set of the objects configurations in which the object penetrates the obstacles: 

\begin{dfn}
\label{dfn::collision_space}
$\mathcal{C}^{col} = \{c \in \mathcal{C}| \operatorname{int}(c(\mathcal{O})) \cap \operatorname{int}(\mathcal{S}) \neq \emptyset\}$, where $c(\mathcal{O})$ denotes the object in a configuration $c$. 
The {\it free space} $\mathcal{C}^{free}$ is the complement of the collision space: $\mathcal{C}^{free} = \mathcal{C} - \mathcal{C}^{col}$.
\end{dfn}

Note that this definition allows the object to be in contact with the obstacles.

\begin{dfn}
Two configurations $c_1$ and $c_2$ are called {\it path-connected} if there exists a continuous collision-free path $\gamma: [0, 1] \to \mathcal{C}^{free}$ between them: $\gamma(0) = c_1$, $\gamma(1) = c_2$.
If two configurations are path-connected, they belong to the same path-connected component.
\end{dfn}

\subsection{An Overview of Our Approach}
To compute path-connected component of the free space, we decompose the free space into a set of two-dimensional slices. The key idea of this approach has been discussed in our paper \cite{varava_2}. Now we provide the implementation details and use our approach to compute geometric characteristics of the free space introduced in the next section. 

In \cite{varava_2}, we suggested that configuration space decomposition might be a more computationally efficient alternative to its direct construction. Namely, we represent the configuration space as a product $\mathcal{C} = \R^{2} \times SO(2)$, and consider a finite covering of $SO(2)$:
$$
SO(2) = \bigcup_{i \in \{1,..,s\}}U_i
$$

Note that this is always possible, since $SO(2)$ is compact.

We recall the notion of a slice, introduced in \cite{varava_2}:
\begin{dfn}
A {\it slice} of the configuration space $\mathcal{C}$ is a subset of $\mathcal{C}$ defined as follows:
$$
Sl_U = \R^2 \times U,
$$
where $U$ is an subset of $SO(2)$.
\end{dfn} 

We denote a slice of the collision (free) space by $Sl_U^{col}$ ($Sl_U^{free}$). For each slice we construct an approximation $aSl_U^{col}$ of its collision space in such a way that our approximation lies inside the real collision space of the slice, 
$$aSl_U^{col} \subset Sl_U^{col}$$

This way, we approximate the entire collision space by its subset $a\mathcal{C}^{col}$: 

$$
a\mathcal{C}^{col} = \bigcup_{i \in \{1,..,s\}} aSl_{U_i}^{col} \subset \mathcal{C}^{col}
$$

In Sec.~\ref{our_approach}, we recall how we construct these approximations and provide details of our approach.

\section{Free Space and its Properties}
Let us now discuss how we can quantitatively estimate the `quality' of path-connected components and paths between different parts of the free space. We start with the volume of a component, which can be interesting for object manipulation. Then, we consider the width of narrow passages of the free space, which can be applied to motion planning.

\subsection{The Volume of Path-Components}
In robotic manipulation, it is useful to distinguish between big and small connected components, see Fig.~\ref{fig::amount-freedom}.
Recall that the free space is a subset of $SE(2) = \R^2 \times SO(2)$. There is a natural parametrization of $SO(2)$ as a one-dimensional manifold by the \emph{angle} $\theta\in\R$ and therefore we can represent an element of $SE(2)$ by $(x,y,\theta)$.

Now, note that since $SE(2)$ is a Lie group \cite{geomcontrol}, each point $p\in SE(2)$ gives rise to a diffeomorphism $L_p : SE(2)\to SE(2)$. Concretely, given a point $(x,y,\theta)$ in $SE(2)$, this function has the form:

\[ L_{(x',y',\theta')} (x,y,\theta) = (x' + x ,y' + y,\theta + \theta'), \]

which is a diffeomorphism of $SE(2)$ to itself. Furthermore, from this we can obtain the volume form of $SE(2)$ as the pullback of a volume element along $L_p$:

\[ \omega_p = dL_{-p}^*\omega_{(0,0,0)}\]

Therefore, choosing $\omega_{(0,0,0)} = dx\wedge dy\wedge d\theta$ we can obtain the volume of a subset of $SE(2)$ as:

\[ vol(U) = \int_U dx\wedge dy\wedge d\theta  = \int_U dx\,dy\,d\theta \]

Thus, for a set of the form $ U = [x_1,x_2]\times [y_1,y_2]\times [\theta_1,\theta_2]$ in $SE(2)$, we get $vol(U) = (x_2-x_1)(y_2-y_1)\min(\theta_2-\theta_1,2\pi)$.

\begin{figure}[htb!]   
\center{\includegraphics[width=0.25\textwidth]{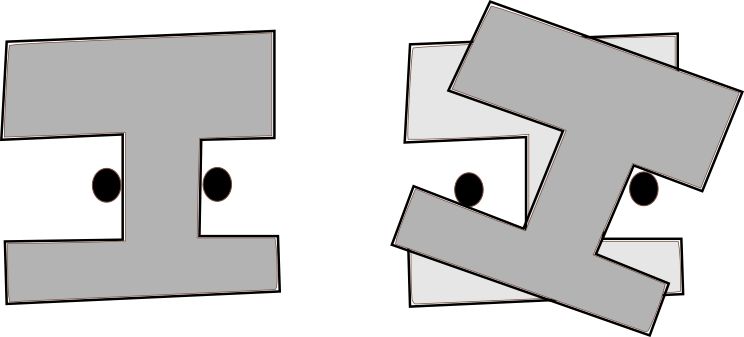}}
  \caption{On the left, an object is caged so that it can only move up and down; on the right, it can rotate within the cage, so the first cage provides a more accurate position of the object. In terms of the free space, in the second cage is larger than the first.}
  \label{fig::amount-freedom} 
  \vspace*{-.3cm}
\end{figure}

\subsection{The Width of Narrow Passages}
Motion planning becomes more challenging in presence of narrow passages. In particular, although our definition of collision space allows the object to be in contact with the obstacles, in practice we might want to avoid such situations. Furthermore, since we deal with approximations of objects and obstacles, the notion of robustness of a path is useful.  Therefore, another interesting geometric characteristic is the width of narrow passages of the free space. See Fig.~\ref{fig::narrow-pass}.

\begin{figure}[htb!]    
\center{\includegraphics[width=0.35\textwidth]{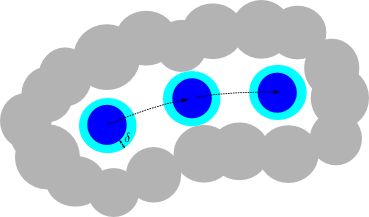}}
  \caption{The obstacles are depicted in grey, and the object is depicted in blue. The width of the narrow passage is equal to $\delta$ -- a $\delta$-offset of the object (depicted in light blue) can pass through it.}
  \label{fig::narrow-pass}
  \vspace*{-.5cm}
\end{figure}

Let $\dsign^O (x): \R^2 \to \R$ denote the signed distance from the point $x \in \R^2$ to the set $O$, defined as $\dsign^O (x) = \dist(x, \partial O)$ if $x \notin O$ and $\dsign^O (x) = - \dist(x, \partial O)$ if $x \in O$.

We say that two configurations are $\delta-$connected if there is a $\delta-$clearance path between them, or, more formally, we define $\delta-$connectivity as follows:

\begin{dfn}
Two collision-free configurations $c_1, c_2$ of $\mathcal{O}$ are {\it $\delta-$connected}, if there is a collision-free path $\gamma: [0, 1] \to \mathcal{C}^{free}$ between them such that the signed distance from any point of the object to the obstacles along this path is not less than $\delta$:
$$
\forall c \in \gamma, \
\forall x \in c(\mathcal{O}): \
\dsign^\mathcal{S} (x) \geq \delta
$$
\end{dfn}

Note that $\delta-$connectivity is a generalization of path-connectivity: two configurations are path-connected if and only if they are $0-$connected.

In the next section, we discuss how we check $\delta-$connectivity of the free space. 
\section{Free Space Decomposition}
\label{our_approach}
\subsection{An $\varepsilon-$core of the object}

First of all observe that by Def.~\ref{dfn::collision_space}, if a subset $\mathcal{O}'$ of an object $\mathcal{O}$ placed in configuration $c \in \mathcal{C}$ is in collision, then the entire object $\mathcal{O}$ is in collision. Therefore, the collision space of $\mathcal{O}$ is completely contained within the collision space of $\mathcal{O}'$. This allows us to make the following observation:
\begin{obs}
\label{obs::correctness}
Consider an object $\mathcal{O}$ and a set of obstacles $\mathcal{S}$. Let $c_1, c_2 \in \mathcal{C}^{free}$ be two collision-free configurations of the object. If there is no collision-free path between these configurations for its subset $\mathcal{O}' \subset \mathcal{O}$, then there is no collision-free path connecting these configurations for the original object.
\end{obs}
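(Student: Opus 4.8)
The plan is to prove the contrapositive: assuming a collision-free path exists for the full object $\mathcal{O}$, I will exhibit one for the subset $\mathcal{O}'$. The engine of the argument is the monotonicity already flagged in the paragraph preceding the statement — passing to a subset of the object can only shrink its collision space — so the bulk of the work is to turn that remark into a clean set inclusion and then transport paths across it.

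First I would establish the inclusion $\mathcal{C}^{col}_{\mathcal{O}'} \subseteq \mathcal{C}^{col}_{\mathcal{O}}$ (equivalently $\mathcal{C}^{free}_{\mathcal{O}} \subseteq \mathcal{C}^{free}_{\mathcal{O}'}$), where $\mathcal{C}^{col}_{\mathcal{O}}$ and $\mathcal{C}^{col}_{\mathcal{O}'}$ denote the collision spaces of $\mathcal{O}$ and $\mathcal{O}'$. Fix any configuration $c \in SE(2)$. Since $\mathcal{O}' \subseteq \mathcal{O}$ and $c$ acts as a bijection of $\R^2$, we get $c(\mathcal{O}') \subseteq c(\mathcal{O})$, and because the interior operator is monotone with respect to inclusion this yields $\operatorname{int}(c(\mathcal{O}')) \subseteq \operatorname{int}(c(\mathcal{O}))$. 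Hence if $\operatorname{int}(c(\mathcal{O}')) \cap \operatorname{int}(\mathcal{S}) \neq \emptyset$ then also $\operatorname{int}(c(\mathcal{O})) \cap \operatorname{int}(\mathcal{S}) \neq \emptyset$; by Def.~\ref{dfn::collision_space} this is exactly $c \in \mathcal{C}^{col}_{\mathcal{O}'} \Rightarrow c \in \mathcal{C}^{col}_{\mathcal{O}}$, and taking complements in $\mathcal{C}$ gives the inclusion of free spaces.

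Second, I would use this inclusion to transport paths. Suppose, for contradiction, that there is a continuous $\gamma : [0,1] \to \mathcal{C}^{free}_{\mathcal{O}}$ with $\gamma(0) = c_1$ and $\gamma(1) = c_2$. By the inclusion just proved, $\gamma([0,1]) \subseteq \mathcal{C}^{free}_{\mathcal{O}} \subseteq \mathcal{C}^{free}_{\mathcal{O}'}$, so the very same map $\gamma$ is a continuous collision-free path for $\mathcal{O}'$ joining $c_1$ to $c_2$; its endpoints are legitimate because $c_1, c_2 \in \mathcal{C}^{free} = \mathcal{C}^{free}_{\mathcal{O}} \subseteq \mathcal{C}^{free}_{\mathcal{O}'}$. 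This contradicts the hypothesis that $c_1$ and $c_2$ admit no collision-free path for $\mathcal{O}'$, so no collision-free path for $\mathcal{O}$ can exist either.

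There is essentially no hard step here: the entire content is monotonicity of the interior operator under inclusion combined with Def.~\ref{dfn::collision_space}. The only point worth stating carefully is that the rigid motion $c$ is applied to both $\mathcal{O}'$ and $\mathcal{O}$ \emph{before} taking interiors, so that it is the inclusion $c(\mathcal{O}') \subseteq c(\mathcal{O})$ — not merely $\mathcal{O}' \subseteq \mathcal{O}$ — that licenses $\operatorname{int}(c(\mathcal{O}')) \subseteq \operatorname{int}(c(\mathcal{O}))$. I would also remark that the argument uses nothing about $\mathcal{O}'$ beyond $\mathcal{O}' \subseteq \mathcal{O}$; in particular $\mathcal{O}'$ need not be connected, compact, or regular, which is what later makes it legitimate to replace the object by a union of balls strictly inside it.
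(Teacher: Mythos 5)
Your proposal is correct and follows essentially the same route as the paper: the paper justifies this Observation by exactly the monotonicity remark you formalize (a configuration in which the subset $\mathcal{O}'$ collides is one in which $\mathcal{O}$ collides, giving $\mathcal{C}^{col}_{\mathcal{O}'} \subseteq \mathcal{C}^{col}_{\mathcal{O}}$ and hence the reverse inclusion of free spaces), and then treats the transport of a path and the contrapositive as immediate. Your version merely spells out the interior-monotonicity step and the endpoint check explicitly, and your inclusion is stated in the correct direction, whereas the paper's preceding prose phrases it with the roles of $\mathcal{O}$ and $\mathcal{O}'$ inverted.
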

\begin{crl}
\label{crl::caging-correctness}
If some subset $\mathcal{O}'$ of $\mathcal{O}$ in the configuration $c$ is caged, then the entire object $\mathcal{O}$ in the same configuration $c$ is also caged.
\end{crl}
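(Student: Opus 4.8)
The plan is to deduce the corollary directly from Observation~\ref{obs::correctness} together with the obvious inclusion between the two collision spaces. Recall that ``$\mathcal{O}$ is caged in configuration $c$'' means that the path-connected component of $c$ in $\mathcal{C}^{free}$ is compact; since $\mathcal{C}^{free} \subseteq SE(2) = \R^2 \times SO(2)$, this is equivalent to the existence of a radius $r>0$ such that this component is contained in $B_r(0)\times SO(2)$, i.e. the object cannot be joined by a collision-free path to any configuration whose translational part lies outside the disc $B_r(0)$.

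First I would record the basic inclusion. Because $\mathcal{O}' \subseteq \mathcal{O}$, the remark preceding Observation~\ref{obs::correctness} gives $\mathcal{C}^{col}_{\mathcal{O}'} \subseteq \mathcal{C}^{col}_{\mathcal{O}}$, and hence $\mathcal{C}^{free}_{\mathcal{O}} \subseteq \mathcal{C}^{free}_{\mathcal{O}'}$, where the subscript indicates which set is taken as the moving object. Let $P'$ be the path-connected component of $c$ in $\mathcal{C}^{free}_{\mathcal{O}'}$; by hypothesis $P'$ is compact, so there is $r>0$ with $P' \subseteq B_r(0)\times SO(2)$. Now take an arbitrary configuration $c'$ whose translational part lies outside $B_r(0)$. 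Then $c'\notin P'$, so $c$ and $c'$ are not path-connected in $\mathcal{C}^{free}_{\mathcal{O}'}$; applying Observation~\ref{obs::correctness} with the subset $\mathcal{O}'\subseteq\mathcal{O}$, they are not path-connected in $\mathcal{C}^{free}_{\mathcal{O}}$ either. Consequently the path-connected component $P$ of $c$ in $\mathcal{C}^{free}_{\mathcal{O}}$ also lies in $B_r(0)\times SO(2)$ and is therefore bounded.

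The only remaining point — and the one I expect to require the most care, even though it is routine — is upgrading ``bounded'' to ``compact''. Here I would argue that $\mathcal{C}^{col}_{\mathcal{O}}$ is open (an overlap of the open sets $\operatorname{int}(c(\mathcal{O}))$ and $\operatorname{int}(\mathcal{S})$ persists under small perturbations of $c$), so $\mathcal{C}^{free}_{\mathcal{O}}$ is closed in $SE(2)$; for the ball-representable sets we consider $\mathcal{C}^{free}_{\mathcal{O}}$ is locally path-connected, so its path components coincide with its connected components and are thus closed. Hence $P$ is a closed bounded subset of $SE(2)$, i.e. compact, which is exactly the assertion that $\mathcal{O}$ is caged in $c$. (If instead ``caged'' is read simply as ``contained in a bounded path-connected component'', the second paragraph already finishes the proof and this last step can be omitted.) No genuine geometric difficulty arises: all the content is in the monotonicity of collision under $\mathcal{O}'\subseteq\mathcal{O}$, which is already captured by Observation~\ref{obs::correctness}.
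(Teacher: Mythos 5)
Your proposal is correct and takes essentially the same route as the paper, which states the corollary without a separate proof as an immediate consequence of Observation~\ref{obs::correctness}: any escape path for $\mathcal{O}$ is collision-free for $\mathcal{O}' \subseteq \mathcal{O}$, so a caged subset forbids escape of the whole object. Your final paragraph upgrading ``bounded'' to ``compact'' is extra care beyond what the paper supplies (and its local path-connectedness claim is the only mildly delicate point), but your hedge that the bounded-component reading already suffices is appropriate.
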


From the above observation it follows that if we construct $aSl_{U}^{col}$ in such a way that for any configuration $c \in aSl_{U}^{col}$ there exists a subset $\mathcal{O}'$ of $c(\mathcal{O})$ such that $\mathcal{O}'$ is in collision, then $c(\mathcal{O})$ is also in collision.

In \cite{varava_2} we defined an $\varepsilon-$core of an object as follows:

\begin{dfn}
An $\varepsilon$-core of an object $\mathcal{O}$ is a subset $\mathcal{O}_{\varepsilon}$ of the object such that any point of $\mathcal{O}_{\varepsilon}$ lies at least at a distance\footnote{By distance here we mean Euclidean distance in $\R^2$} $\varepsilon$ from the boundary of $\mathcal{O}$:
$$
\mathcal{O}_{\varepsilon} = \{p \in \mathcal{O}| d(p, \partial \mathcal{O}) \geq \varepsilon \} 
$$ 
\end{dfn}

Let us denote an object $\mathcal{O}$ and its $\varepsilon$-core $\mathcal{O}_{\varepsilon}$ by $\mathcal{O}^{\phi}$ and $\mathcal{O}_{\varepsilon}^{\phi}$ respectively when their orientation $\phi \in SO(2)$ is fixed. Let $\mathcal{C}^{col}(\mathcal{O}_{\varepsilon}^{\phi})$ denote the collision space of the $\varepsilon-$core with a fixed orientation $\phi$. Note that since the orientation is fixed, $\mathcal{C}^{col}(\mathcal{O}_{\varepsilon}^{\phi})$ is a subset of $\R^2$.

In \cite{varava_2}, we showed that for an object $\mathcal{O}^{\phi}$ and its $\varepsilon$-core $\mathcal{O}^{\phi}_{\varepsilon}$ with a fixed orientation $\phi$ there always exists a non-empty neighbourhood  $U(\phi) \subset SO(2)$ of $\phi$ such that for any $\theta \in U(\phi)$ the $\varepsilon$-core $\mathcal{O}^{\phi}_{\varepsilon}$ is fully contained within a slightly rotated object $\mathcal{O}^{\theta}$, see Fig.~\ref{rotated}.

\begin{wrapfigure}{r}{0.2\textwidth} 
\includegraphics[width=0.2\textwidth]{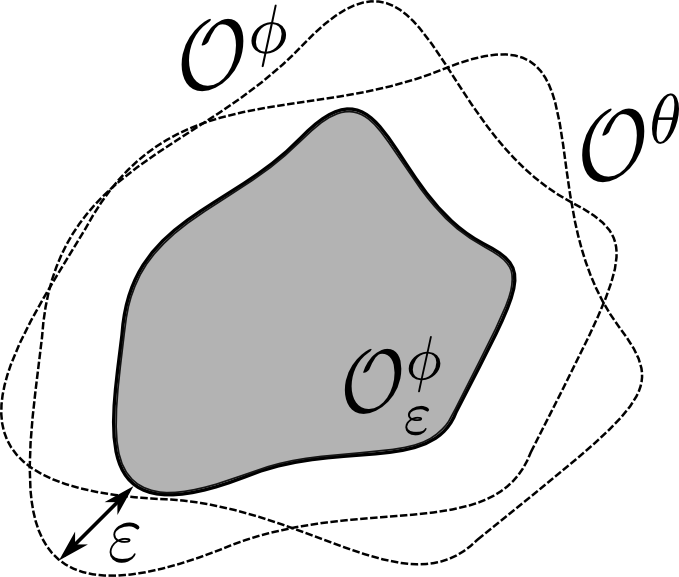}
    \caption{An $\varepsilon-$core remains inside the object when we slightly rotate it}
    \label{rotated}
\end{wrapfigure} 

So, we construct the collision space approximation as follows: we pick an $\varepsilon > 0$ and a sufficient set of orientation values $\{\phi_1, .., \phi_s\}$ so that $U(\phi_1), .. U(\phi_s)$ form a covering of $SO(2)$ such that for any $\theta \in U(\phi_i)$ we have $\mathcal{O}^{\phi_i}_{\varepsilon} \subset \mathcal{O}^{\theta}$.
This gives us a decomposition of the entire configuration space:
$$\mathcal{C} = \bigcup_{i \in \{1, .., s\}} Sl_{U(\phi_i)}$$

For each $\phi_i \in \{\phi_1, .., \phi_s \}$, we compute collision space slice approximations $aSl_U(\phi_i)^{col}$ as the collision space of $\mathcal{O}^{\phi_i}_{\varepsilon}$:
$$
aSl_U(\phi_i)^{col} = \mathcal{C}^{col}(\mathcal{O}_{\varepsilon}^{\phi_i}) \times U(\phi_i)
$$

\subsection{$SO(2)$ Partition}

Let $\D(\Delta \phi)$ denote the maximal displacement of any point $p \in \mathcal{O}$ after rotating it to an angle $\Delta \phi$ around the object's geometric center:
$$
\D(\Delta \phi) = \max_{p \in \mathcal{O}}(\dist(p, \operatorname{R}_{\Delta \phi} (p))),
$$
where $\operatorname{R}_{\Delta \phi} (p)$ is the rotation operator. Then $\mathcal{O}^{\theta}_{\varepsilon} \subset \mathcal{O}^{\theta}$ is fully contained inside a rotated object $\mathcal{O}^{\theta + \Delta \phi}$ if $\D (\Delta \phi) < \varepsilon$. In \cite{varava_2}, we have shown that 
$$
\D(\Delta \phi) \leq 2|\sin(\Delta \phi/2)|\cdot \diam(\mathcal{O}),
$$
assuming that we rotate the object around its geometric center and $\diam(\mathcal{O})$ denotes the biggest distance from it to any point of the object. Note that $\D(\Delta \phi)$ monotonically decreases with $\Delta \phi$.

So, given an $\varepsilon$ we pick $\Delta \phi$ such that 
$\D (\Delta \phi) < \varepsilon$, and compute a set of orientation samples $\{\phi_1 = 0, \phi_2 = \Delta \phi, .., \phi_{s+1} = \min\{2 \pi, s \cdot \Delta \phi\}\}$. This gives us a covering $P(\Delta \phi) = \{U_1(\phi_1), .., U_s(\phi_s)\}$ of $SO(2)$ of the form $U_i(\phi_i) = [\phi_i, \phi_{i+1}].$

Given $\varepsilon$ and $\Delta \phi$, we approximate the collision space by 
$$
a\mathcal{C}^{col}(\varepsilon, \Delta \phi) = \bigcup^{s}_{i = 1} \mathcal{C}^{col}(\mathcal{O}_{\varepsilon}^{\phi_i}) \times U_i.
$$

\subsection{Construction of Slices}

In \cite{varava_2}, we derive the following representation for the collision space of  $\mathcal{O}^{\phi_i}_{\varepsilon}$:

$$
\mathcal{C}^{col}(\mathcal{O}_{\varepsilon}^{\phi}) = \bigcup_{i \in \{1 .. m\}, \ j \in \{1 .. n\}} (B_{R_j + r_i - \varepsilon}(X_j - \overline{GY_i})),
$$

where $G$ is the origin chosen as the geometric center of the object, and $\overline{GY_i}$ denotes the vector from $G$ to $Y_i$.

For simplicity, in our implementation we assume that the obstacles and the object are approximated by sets balls of equal radii: $R_1 = .. = R_n = R$, $r_1 = .. = r_m = r$.

Now, let us discuss how we construct path-connected components of $\mathcal{C}^{col}(\mathcal{O}_{\varepsilon}^{\phi})$. Since $\mathcal{C}^{col}(\mathcal{O}_{\varepsilon}^{\phi})$ is a collection of balls, we represent it as an alpha complex. 

An alpha complex $A(R)$ representing a union of balls $\bigcup_{i = 1}^{n} B_R(x_i)$  is a simplicial complex with vertices $\{x_1, .., x_n\}$ which lies strictly inside $\bigcup_{i = 1}^{n} B_R(x_i)$. By the nerve theorem, an alpha complex is homotopy equivalent to the union of balls it approximates \cite{edelsbrunner}. 

Assume we have a finite set of points $X \subset \R^2$. Let us continuously increase the radius of the balls centered at these points, and consider a nested family of unions of balls. Correspondingly, we get a nested family of alpha complexes, $\emptyset = A(R_0) \subset A(R_1) \subset .. \subset D(X)$, where $D(X)$ is the Delaunay triangulation of $X$. Since $D(X)$ is finite, and any $A(R_i)$ is its subcomplex, the family of nested subcomplexes is also finite.

\vspace*{-.3cm}
\begin{algorithm}[htb]
\small
  \SetAlgoLined
  \KwData{A union of $d$-dimensional balls $\mathcal{C}^{col}(\mathcal{O}_{\varepsilon}^{\phi})$}
  \KwResult{A set of connected components ${aC_0, .., aC_n}$}
  $D(\mathcal{C}^{col}(\mathcal{O}_{\varepsilon}^{\phi})) \leftarrow $ Delaunay-Triangulation$(\mathcal{C}^{col}(\mathcal{O}_{\varepsilon}^{\phi}))$\\      
  $C_{0} \leftarrow$ Compute-Infinite-Component()\\
  $i \leftarrow 0$ \\
  \ForEach{$t_j \in $ Get-Exterior-Triangles$(D(\mathcal{C}^{col}(\mathcal{O}_{\varepsilon}^{\phi})), R+r-\varepsilon)$}{		
				\If{Marked($t_j$)}
				{$i$++\\
				$aC_{i} \leftarrow$ Compute-Component($t_j$)
				}		
	}
	
	\Return{$\{aC_0, .., aC_n\}$}
  \caption{Compute-Slice-Connectivity}
  \label{compute-slice-connectivity}
\end{algorithm} 
\vspace*{-.3cm}
Here by $aC_i \subset \R^2$ we denote a triangular representation of the three-dimensional path-connected component $C_i$ of a slice. These approximations are constructed as connected components of the complement of the alpha-complex representing the collision space of a slice.

{\it Delaunay-Triangulation()} computes the Delaunay triangulation based on the centers of the balls representing the collision space.

{\it Compute-Infinite-Component()} computes an artificially created connected component representing the unbounded connected component of the free space. It contains those exterior triangles which can be reached from the boundary of the Delaunay triangulation.

{\it Get-Exterior-Triangles$(D(\mathcal{C}^{col}(\mathcal{O}_{\varepsilon}^{\phi})), R+r-\varepsilon)$} returns a list of faces of the Delaunay triangulation lying outside of the alpha-complex $A(R+r-\varepsilon)$, which approximates the collision space represented as a union of balls of radius $R+r-\varepsilon$.

\subsection{The Connectivity Graph}

Once we have computed connected components of each slice, we can analyze the connectivity between slices. A detailed description of this procedure can be found in \cite{varava_2}, and we briefly recall it here. Let $\mathcal{G}(a\mathcal{C}^{col}(\varepsilon, \Delta \phi)) = (V, E)$ be a graph approximating the free space.  The vertices of $\mathcal{G}$ correspond to the connected components $\{aC^i_1, .., aC^i_{n_i}\}$ of each slice, $i \in \{1,..s\}$, and are denoted by $v = (aC, U)$, where $aC$ and $U$ are the corresponding component and orientation interval.

Two vertices representing components $C_p \subset aSl_{U_i}^{free}$ and $C_q \subset aSl_{U_j}^{free}$, $i \neq j$, are connected by an edge if the object can directly move between them. For that, the sets $U_i$ and $U_j$ must overlap: $U_i \cap U_j \neq \emptyset$,
 and the respective triangular approximations of path-connected components must intersect: 
 $aC_q \cap aC_p \neq \emptyset$.

$\mathcal{G}(a\mathcal{C}^{col}(\varepsilon, \Delta \phi))$ approximates the free space of the object: if two configurations are disconnected in $\mathcal{G}(a\mathcal{C}^{col}(\varepsilon, \Delta \phi))$, then they are disconnected in $\mathcal{C}^{free}$. Moreover, we later show that if two configurations are not path-connected in $\mathcal{C}^{free}$, then they are not $\delta-$connected in  $\mathcal{G}(a\mathcal{C}^{col}(\varepsilon, \Delta \phi))$ for any $\delta > \varepsilon$.

\subsection{The Width of the Narrow Passages}
Consider a superset of our object $\mathcal{O}$, defined as a set of points lying at most at distance $\delta$ from $\mathcal{O}$, and let us call it a $\delta-$offset of the object:
\begin{dfn}
\label{dfn::delta-offset}
A {\it $\delta-$offset} of an object $\mathcal{O}$ is defined as
$$
\mathcal{O}_{\delta} = \{p \in \R^2| d(p, \partial \mathcal{O}) \leq \delta \},
$$
where $d(.,.)$ denotes Euclidean distance in $\R^2$.
\end{dfn}

Now observe that by definition, two configurations are $\delta-$connected if and only if there exists a collision-free path connecting these configurations in the configuration space of the $\delta$-offset $\mathcal{O}_{\delta}$ of the object.
This means that two configurations $c_1$ and $c_2$ are not $\delta-$connected in $\mathcal{C}^{free}(\mathcal{O})$ if and only if they are not path-connected in $\mathcal{C}^{free}(\mathcal{O}_{\delta})$.

Therefore, to understand the $\delta-$connectivity of the free space it is enough to compute path-connected components as described in Alg.~\ref{compute-slice-connectivity} for the $\delta$-offset $\mathcal{O}_{\delta}$.

We can also slightly modify Alg.~\ref{compute-slice-connectivity} to find potential narrow passages for each $\delta > 0$. In this case, instead of constructing a Delaunay triangulation, we can compute a nested family of alpha-complexes of the centers of the collision balls for {\it all} positive alpha. As we mentioned before, this family is always finite. Increasing the $\alpha$ parameter corresponds to increasing the radius of the object under consideration, i.e., the family of alpha complexes approximates a nested family of collision space approximations of $\delta-$offsets of the object for all positive $\delta$. As we increase $\delta$, the topology of the free space of the slice will change, and the narrow passages will disappear. The corresponding values of $\delta$ reflect the width of the passages.
\section{Theoretical Guarantees of Our Approach}

In this section, we discuss correctness, completeness and computational complexity of our approach.

\subsection{Correctness}
First of all, let us show that our algorithm is correct: i.e., if there is no collision-free path between two configurations in our approximation of the free space, then these configurations are also disconnected in the actual free space. 

\begin{prp}
Consider an object $\mathcal{O}$ and a set of obstacles $\mathcal{S}$. Let $c_1, c_2$ be two collision-free configurations of the object. If $c_1$ and $c_2$ are not path-connected in $\mathcal{G}(a\mathcal{C}^{free}_{\varepsilon, \Delta \phi})$, then they are not path-connected in $\mathcal{C}^{free}$.
\end{prp}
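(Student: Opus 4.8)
The plan is to establish a single key containment, from which the proposition follows formally. Namely, I would show that the approximate collision space $a\mathcal{C}^{col}_{\varepsilon,\Delta\phi}$ is a genuine subset of the true collision space $\mathcal{C}^{col}$; equivalently, that $\mathcal{C}^{free} \subset a\mathcal{C}^{free}_{\varepsilon,\Delta\phi}$, and moreover that this set-theoretic inclusion is faithfully reflected by the connectivity graph $\mathcal{G}$. Once this is in hand, the argument is short: if $c_1,c_2$ are path-connected in $\mathcal{C}^{free}$ via a path $\gamma$, then $\gamma$ also lies in $a\mathcal{C}^{free}_{\varepsilon,\Delta\phi}$, and I must then promote this continuous path to a walk in $\mathcal{G}$ — so the contrapositive gives the statement. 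The main obstacle, I expect, is precisely this last step: translating between a continuous path in the union-of-slices approximation and a combinatorial path in the graph whose vertices are (triangulated) connected components of individual slices.

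First I would unwind the definitions from Sec.~\ref{our_approach}. For each $i$, by the choice of $\Delta\phi$ with $\D(\Delta\phi)<\varepsilon$ and the monotonicity of $\D$, we have $\mathcal{O}^{\phi_i}_\varepsilon \subset \mathcal{O}^\theta$ for every $\theta \in U_i = [\phi_i,\phi_{i+1}]$. Hence for any $\theta\in U_i$ and any translation $t$, if $(t,\theta)\in \mathcal{C}^{col}(\mathcal{O}^{\phi_i}_\varepsilon)\times U_i$ then the $\varepsilon$-core placed at $(t,\phi_i)$, and therefore (after the admissible rotation) the full object at $(t,\theta)$, is in collision — here I use Observation~\ref{obs::correctness} together with the fact that the $\varepsilon$-core is a subset of the object. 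Summing over $i$ and using that $\{U_i\}$ covers $SO(2)$ yields $a\mathcal{C}^{col}_{\varepsilon,\Delta\phi} = \bigcup_i \mathcal{C}^{col}(\mathcal{O}^{\phi_i}_\varepsilon)\times U_i \subset \mathcal{C}^{col}$, hence $\mathcal{C}^{free}\subset a\mathcal{C}^{free}_{\varepsilon,\Delta\phi}$.

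Next I would handle the graph reflection. Let $\gamma:[0,1]\to\mathcal{C}^{free}\subset a\mathcal{C}^{free}_{\varepsilon,\Delta\phi}$ be a collision-free path with $\gamma(0)=c_1$, $\gamma(1)=c_2$. Composing $\gamma$ with the projection $SE(2)\to SO(2)$ gives a continuous loop-segment in $SO(2)$; by compactness of $[0,1]$ and the fact that the interiors of the $U_i$ (suitably interpreted at the overlaps) cover $SO(2)$, I can choose a partition $0=t_0<t_1<\dots<t_N=1$ such that each subpath $\gamma|_{[t_{k-1},t_k]}$ stays inside a single slice $Sl_{U_{i_k}}$, i.e. inside $Sl^{free}_{U_{i_k}} \subset aSl^{free}_{U_{i_k}}$ (Lebesgue number argument). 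Each such subpath therefore lies in one path-connected component $C^{i_k}_{p_k}$ of $aSl^{free}_{U_{i_k}}$, which is a vertex of $\mathcal{G}$. At each break point $t_k$ the configuration $\gamma(t_k)$ lies simultaneously in $C^{i_k}_{p_k}$ and $C^{i_{k+1}}_{p_{k+1}}$, so their triangular images intersect and $U_{i_k}\cap U_{i_{k+1}}\neq\emptyset$ — exactly the edge condition. (Here I would be slightly careful that the \emph{triangular} approximations $aC$ of the components, rather than the components themselves, intersect; since the alpha-complex construction and the exterior-triangle components are designed so that $aC$ is a deformation retract / cover of the true region, a point in the overlap of two components sits in a common triangle of each, giving $aC_{p_k}\cap aC_{p_{k+1}}\neq\emptyset$.) Thus $v_0 = (C^{i_1}_{p_1},U_{i_1}), \dots, v_N$ is a walk in $\mathcal{G}$ from the vertex containing $c_1$ to the one containing $c_2$, so they are path-connected in $\mathcal{G}(a\mathcal{C}^{free}_{\varepsilon,\Delta\phi})$. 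Taking the contrapositive proves the proposition.

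The delicate points I would be most careful about are: (a) the partition/Lebesgue-number step, making sure the overlaps of the $U_i$ are genuine (so a path crossing an endpoint $\phi_{i+1}$ really does lie in two slices there) — this may require treating the covering as one by slightly enlarged closed intervals, which is harmless since enlarging $U_i$ only enlarges the approximate collision space within $\mathcal{C}^{col}$ as long as the core-containment still holds, or simply noting that adjacent closed intervals share the endpoint $\phi_{i+1}$; and (b) the passage from ``components intersect'' to ``triangular approximations intersect,'' which hinges on the precise output guarantees of Alg.~\ref{compute-slice-connectivity} and the nerve theorem for alpha-complexes cited earlier. Everything else is bookkeeping on top of Observation~\ref{obs::correctness}.
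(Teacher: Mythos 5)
Your strategy is the contrapositive of the paper's and rests on the same two ingredients: the containment $\mathcal{C}^{col}(\mathcal{O}^{\phi_i}_{\varepsilon})\subset\mathcal{C}^{col}(\mathcal{O}^{\phi})$ for all $\phi\in U(\phi_i)$ (your first part, identical in substance to the paper's), and the relation $C\subset aC\times U$ between a slice component and its triangular representation. Your delicate point (b) is in fact harmless: precisely because $C\subset aC\times U$, a configuration lying in two slice components has its spatial part in both triangular representations, so $aC_{p_k}\cap aC_{p_{k+1}}\neq\emptyset$ follows directly, with no appeal to the nerve theorem or deformation retracts; the paper uses exactly this containment, only in the reverse direction (no edge $\Rightarrow aC_1\cap aC_2=\emptyset\Rightarrow C_1$ and $C_2$ are disjoint in the union of the two adjacent slices).

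The genuine gap is your delicate point (a), the subdivision of $\gamma$. The cover $U_i=[\phi_i,\phi_{i+1}]$ consists of closed intervals overlapping only in single points, so it has no Lebesgue number, and the finite partition you need may simply not exist: if the orientation component of $\gamma$ oscillates across a shared endpoint infinitely often (e.g. $\theta(t)=\phi_{i+1}+(t-\tfrac12)\sin\bigl(\tfrac{1}{t-1/2}\bigr)$ near $t=\tfrac12$), then every piece touching $t=\tfrac12$ takes orientations on both sides of $\phi_{i+1}$ and lies in no single $U_i$; merely ``noting that adjacent intervals share the endpoint'' does not repair this. Your enlarged-intervals fix can be made to work, because $\D(\Delta\phi)<\varepsilon$ is strict and $\D$ is continuous, so the core containment $\mathcal{O}^{\phi_i}_{\varepsilon}\subset\mathcal{O}^{\theta}$ survives a small enlargement of each $U_i$; but you would still have to argue that a collision-free configuration with orientation in the enlarged interval has its spatial part outside that slice's collision balls, and that consecutive pieces land in identical or adjacent slices, none of which is spelled out. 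The paper's proof sidesteps subdivision entirely: group the slice components by connectivity in $\mathcal{G}$; by $C\subset aC\times U$ and the edge condition, components in different groups are pairwise disjoint, each group is a finite union of closed sets, so the two unions disconnect $a\mathcal{C}^{free}_{\varepsilon,\Delta\phi}$, and the connected image of any path starting at $c_1$ stays in the group of $c_1$. Replacing your partition argument by this separation argument (applied to the preimages under $\gamma$ of the two unions) closes your proof with no further machinery.
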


\begin{proof}
Recall that the approximation of the free space is constructed as follows:
$$
a\mathcal{C}^{free}(\varepsilon, \Delta \phi) = \bigcup^{s}_{i = 1} aSl^{free}_{U(\phi_i)},
$$
where 
\begin{equation}
\label{eqn::free-slice-dfn}
aSl^{free}_{U(\phi_i)} = Sl_{U(\phi_i)} - aSl^{col}_{U(\phi_i)},
\end{equation}
and 
\begin{equation}
\label{eqn::col-slice-dfn}
aSl^{col}_{U(\phi_i)} = \mathcal{C}^{col}(\mathcal{O}_{\varepsilon}^{\phi_i}) \times U(\phi_i)
\end{equation}

Let us first recall that by construction for any $\phi \in U(\phi_i)$ we have $\mathcal{O}_{\varepsilon}^{\phi_i} \subset \mathcal{O}^{\phi}$, and therefore
$\mathcal{C}^{col}(\mathcal{O}_{\varepsilon}^{\phi_i}) \subset \mathcal{C}^{col}(\mathcal{O}^{\phi})$.

Therefore, $aSl^{col}_{U(\phi_i)} \subset Sl^{col}_{U(\phi_i)}$, which together with (\ref{eqn::free-slice-dfn}) implies that
\begin{equation}
Sl^{free}_{U(\phi_i)} \subseteq aSl^{free}_{U(\phi_i)},
\end{equation}
We now want to show that if there is no path between two vertices $v = (aC, U)$ and $v' = (aC', U')$ in $\mathcal{G}(a\mathcal{C}^{free}_{\varepsilon, \Delta \phi})$, then there is no path between connected components of $a\mathcal{C}^{free}(\varepsilon, \Delta \phi)$ corresponding to them. It is enough to show that if two vertices corresponding to adjacent slices are not connected by an edge, then they represent two components which are disconnected in the union of these adjacent slices.

Consider two adjacent slices $Sl_{U(\phi_i)}$ and $Sl_{U(\phi_{i+1})}$, and two path-connected components $C_1 \subset aSl^{free}_{U(\phi_i)}$ and $C_2 \subset aSl^{free}_{U(\phi_i)}$. Let $aC_1$ and $aC_2$ respectively be their triangular representations.

Let $v_1$ and $v_2$ be two vertices of $\mathcal{G}(a\mathcal{C}^{free}_{\varepsilon, \Delta \phi})$ corresponding to these components: $v_1 = (aC_1, U(\phi_i))$ and $v_2 = (aC_2, U(\phi_{i+1}))$. Let us show that if there is no edge between $v_1$ and $v_2$, then there is no path between the corresponding components $C_1$ and $C_2$ in $aSl_{U(\phi_{i})}^{free} \cup aSl_{U(\phi_{i+1})}^{free}$.
Indeed, since an alpha complex representing the collision space of a slice is a subset of its actual collision space, the complement of the alpha complex is a superset of the actual free space of the slice. So,  $C_1 \subset aC_1 \times U(\phi_i)$ and $C_2 \subset aC_2 \times U(\phi_{i+1})$. Now, if there is no edge between $v_1$ and $v_2$, then $aC_1 \cap aC_2 = \emptyset$. This implies that $aC_1 \times U(\phi_i) \cap aC_2 \times U(\phi_{i+1}) = \emptyset$, and therefore $C_1$ and $C_2$ are two disjoint components of $aSl_{U(\phi_{i})}^{free} \cup aSl_{U(\phi_{i+1})}^{free}$.

\end{proof}

\subsection{$\delta$-Completeness}
We would like to show that if two configurations are not path-connected in $\mathcal{C}^{free}$, we can always construct an approximation of $\mathcal{C}^{free}$ in which these configurations are either disconnected or connected by a narrow passage.

\begin{prp}
Let $c_1, c_2$ be two configurations in $\mathcal{C}^{free}$. If they are not path-connected in $\mathcal{C}^{free}$, then for any $\varepsilon > 0$ and $\delta > \varepsilon$ there exists $\Delta \phi > 0$ such that the corresponding configurations are not $\delta-$connected in the approximated free space $a\mathcal{C}^{free}_{\varepsilon, \Delta \phi}(\mathcal{O})$. 
\end{prp}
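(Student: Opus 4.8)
Here is how I would approach the proof.

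The plan is to route the whole argument through the $\delta$-offset object. By Def.~\ref{dfn::delta-offset} and the observation following it, two configurations are $\delta$-connected in a free space exactly when they are ordinarily path-connected in the corresponding free space of the dilated object $\mathcal{O}_\delta$; applying this to the approximation, as prescribed by the narrow-passage reduction in Sec.~\ref{our_approach}, $c_1$ and $c_2$ are $\delta$-connected in $a\mathcal{C}^{free}_{\varepsilon,\Delta\phi}(\mathcal{O})$ iff they are path-connected in $a\mathcal{C}^{free}_{\varepsilon,\Delta\phi}(\mathcal{O}_\delta)$. (Note that $\mathcal{O}_\delta$ is again regular, and a finite union of balls whenever $\mathcal{O}$ is, so the decomposition machinery applies to it verbatim.) It therefore suffices to exhibit $\Delta\phi>0$ for which $c_1$ and $c_2$ lie in distinct path-components of $a\mathcal{C}^{free}_{\varepsilon,\Delta\phi}(\mathcal{O}_\delta)$. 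The mechanism is this: because $\delta>\varepsilon$, eroding the $\delta$-offset by $\varepsilon$ still leaves a body containing $\mathcal{O}$, in fact with a positive margin $\delta-\varepsilon$ to spare, and that margin is precisely what is needed to swallow the displacement the orientation discretization introduces. Consequently the approximated collision space of $\mathcal{O}_\delta$ contains all of the true collision space of $\mathcal{O}$, and dually $a\mathcal{C}^{free}_{\varepsilon,\Delta\phi}(\mathcal{O}_\delta)$ is trapped inside $\mathcal{C}^{free}(\mathcal{O})$.

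Concretely, I would choose $\Delta\phi>0$ small enough that $\D(\Delta\phi)<\delta-\varepsilon$ (possible since $\D(\Delta\phi)\to 0$ as $\Delta\phi\to 0$ and $\delta-\varepsilon>0$) and also small enough for the $\varepsilon$-core decomposition of $\mathcal{O}_\delta$ itself to be valid. The heart of the proof is then the claim that for every $i$ and every $\phi\in U_i=[\phi_i,\phi_{i+1}]$,
\[
\mathcal{O}^{\phi}\ \subseteq\ (\mathcal{O}^{\phi_i})_{\D(\Delta\phi)}\ \subseteq\ (\mathcal{O}^{\phi_i})_{\delta-\varepsilon}\ \subseteq\ (\mathcal{O}_\delta)^{\phi_i}_{\varepsilon}.
\]
The first inclusion holds because $|\phi-\phi_i|\le\Delta\phi$, so every point of $\mathcal{O}^{\phi}$ lies within $\D(\Delta\phi)$ of $\mathcal{O}^{\phi_i}$; the second is the choice $\D(\Delta\phi)<\delta-\varepsilon$; and the third says that the $\varepsilon$-core of the $\delta$-offset contains the $(\delta-\varepsilon)$-offset, which is immediate from the definitions of offset and core (in the ball model it is just the radius inequality $r+\D(\Delta\phi)\le r+\delta-\varepsilon$ checked ball by ball). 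Passing to interiors gives $\operatorname{int}(\mathcal{O}^{\phi})\subseteq\operatorname{int}((\mathcal{O}_\delta)^{\phi_i}_{\varepsilon})$, so any configuration $(x,\phi)\in\mathcal{C}^{col}(\mathcal{O})$ with $\phi\in U_i$ also puts $(\mathcal{O}_\delta)^{\phi_i}_{\varepsilon}$ in collision; hence $(x,\phi)\in\mathcal{C}^{col}((\mathcal{O}_\delta)^{\phi_i}_{\varepsilon})\times U_i\subseteq a\mathcal{C}^{col}_{\varepsilon,\Delta\phi}(\mathcal{O}_\delta)$. Thus $\mathcal{C}^{col}(\mathcal{O})\subseteq a\mathcal{C}^{col}_{\varepsilon,\Delta\phi}(\mathcal{O}_\delta)$, and taking complements, $a\mathcal{C}^{free}_{\varepsilon,\Delta\phi}(\mathcal{O}_\delta)\subseteq\mathcal{C}^{free}(\mathcal{O})$.

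To finish, recall that $c_1,c_2\in\mathcal{C}^{free}(\mathcal{O})$ are by hypothesis in different path-components of $\mathcal{C}^{free}(\mathcal{O})$. If one of them is not in $a\mathcal{C}^{free}_{\varepsilon,\Delta\phi}(\mathcal{O}_\delta)$, they are trivially not path-connected there. Otherwise both lie in $a\mathcal{C}^{free}_{\varepsilon,\Delta\phi}(\mathcal{O}_\delta)$, which by the previous step is a subspace of $\mathcal{C}^{free}(\mathcal{O})$; a path joining them inside this subspace would be a path joining them inside $\mathcal{C}^{free}(\mathcal{O})$, contradicting the hypothesis. In either case $c_1$ and $c_2$ are not path-connected in $a\mathcal{C}^{free}_{\varepsilon,\Delta\phi}(\mathcal{O}_\delta)$, i.e.\ not $\delta$-connected in $a\mathcal{C}^{free}_{\varepsilon,\Delta\phi}(\mathcal{O})$.

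The single substantive point --- and the only place the hypothesis $\delta>\varepsilon$ is used --- is the middle of the displayed chain: the dilation by $\delta$ has to simultaneously outweigh the erosion by $\varepsilon$ baked into the $\varepsilon$-core approximation and the rotational displacement $\D(\Delta\phi)$ incurred by sampling $SO(2)$ at finitely many orientations, and these three budgets balance exactly under $\D(\Delta\phi)<\delta-\varepsilon$. I expect everything else to be routine bookkeeping: fixing the convention that $\delta$-connectivity in the approximation is evaluated on $\mathcal{O}_\delta$ as in Sec.~\ref{our_approach}, and keeping every set inclusion pointing the right way --- in particular, an approximation of $\mathcal{C}^{free}(\mathcal{O})$ can never itself be contained in $\mathcal{C}^{free}(\mathcal{O})$, which is precisely why one must argue via the larger object $\mathcal{O}_\delta$ rather than directly.
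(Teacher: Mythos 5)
Your proof is correct, and it reaches the conclusion by a genuinely different (though closely parallel) route than the paper. Both arguments pick the same threshold, $\D(\Delta\phi) < \delta - \varepsilon$, but the paper then argues metrically: along any path joining $c_1$ and $c_2$ there is a configuration $c$ in collision for $\mathcal{O}$, so the $\varepsilon$-core at $c$ has signed distance below $\varepsilon$ to the obstacles, and snapping the orientation to the slice sample $\phi_i$ costs at most $\D(\Delta\phi)$, giving clearance below $\delta$ for the approximating core --- i.e.\ no path can be $\delta$-clear in the approximation. You instead make the (under-specified) notion of ``$\delta$-connected in the approximation'' precise via the $\delta$-offset, exactly as the narrow-passage reduction in Sec.~\ref{our_approach} operationalizes it, and prove the set inclusion $\mathcal{C}^{col}(\mathcal{O}) \subseteq a\mathcal{C}^{col}_{\varepsilon,\Delta\phi}(\mathcal{O}_\delta)$ through the chain $\mathcal{O}^{\phi}\subseteq(\mathcal{O}^{\phi_i})_{\D(\Delta\phi)}\subseteq(\mathcal{O}^{\phi_i})_{\delta-\varepsilon}\subseteq(\mathcal{O}_\delta)^{\phi_i}_{\varepsilon}$, so that $a\mathcal{C}^{free}_{\varepsilon,\Delta\phi}(\mathcal{O}_\delta)\subseteq\mathcal{C}^{free}(\mathcal{O})$ and disconnectedness transfers verbatim. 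What your version buys: it avoids the paper's somewhat informal use of signed distance applied to whole sets, it pins down the definition of $\delta$-connectivity in the approximated space in the form the algorithm actually checks (path-connectivity of the slices built for $\mathcal{O}_\delta$), and the containment you establish is a clean reusable lemma. What the paper's version buys: it stays with the core-based approximation of $\mathcal{O}$ itself, never invoking the decomposition machinery for a second object, and it directly exhibits a low-clearance configuration on every candidate path, which is marginally more informative about where the $\delta$-clearance fails. Your identification of $\D(\Delta\phi)<\delta-\varepsilon$ as the one substantive balance of budgets is exactly the heart of the paper's proof as well.
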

\begin{proof}
First, observe that if two configurations $c_1$ and $c_2$ are not path-connected in $\mathcal{C}^{free}(\mathcal{O})$, then for any $\delta > \varepsilon$ they are not $\delta-$connected in $\mathcal{C}^{free}(\mathcal{O}_{\varepsilon})$. We want to show that they are not $\delta-$connected in $a\mathcal{C}^{free}_{\varepsilon, \Delta \phi}(\mathcal{O})$ for a particular choice of $\Delta \phi$. Let $\Delta \phi$ be such that 
\begin{equation}
\label{eqn::delta-phi}
\D(\Delta \phi) < \delta - \varepsilon,
\end{equation}

where $\D(\Delta \phi)$ is the displacement function.

Since $c_1$ and $c_2$ are not path-connected there exists a collision configuration $c$ in any path between them, so, $\dsign^{\mathcal{S}}(c(\mathcal{O})) < 0$. Therefore, for the same configuration $c$ we have $\dsign^{\mathcal{S}}(c(\mathcal{O}_{\varepsilon})) < \varepsilon$. Let $i \in \{1, .., s\}$ be such that $\phi \in U_i(\phi_i)$, and let $c' = (x, \phi_i)$. We get
\begin{multline*}
\dsign^{\mathcal{S}}(c'(\mathcal{O}_{\varepsilon})) \leq \dist(c(\mathcal{O}_{\varepsilon}), c'(\mathcal{O}_{\varepsilon})) + \dsign^{\mathcal{S}}(c(\mathcal{O}_{\varepsilon})) \leq \\
  \dist(c(\mathcal{O}_{\varepsilon}), c'(\mathcal{O}_{\varepsilon})) + \varepsilon \leq \D(\Delta \phi) + \varepsilon < \delta.
\end{multline*}

\end{proof}

\subsection{Computational Complexity}

Let us now estimate the computational complexity of our approach. 
Let $n$ and $m$ be the number of balls in the obstacles and the object approximations, respectively, and let $s$ be the number of slices we need. Let us focus on the basic version of our algorithm when we are not interested in $\delta-$connectivity of the space, and limit ourselves to path-connectivity. Our algorithm has two major steps: first, we compute approximations of the slices and their path-connected component, and then we construct the connectivity graph. For each slice, we compute a Delaunay triangulation of the union $n \, m$ balls, pick $\alpha = (R + r - \varepsilon)^2$, and compute the path-connected components of the corresponding alpha-complex. The computation of path-connected components is linear on the number of faces of the complex, and therefore the overall complexity of this step is dominated by the computation of Delaunay triangulation, which is $O((n \, m) \log(n \, m))$ \cite{guibas}.

Since we need to compute $s$ slices, the overall complexity of the first step of the algorithm is $O(s \, (n \, m) \log(n \, m))$.

Once the slices are computed, we proceed to the next step -- the connectivity graph construction. Assume that each slice has at most $q$ path-connected components. We create vertices of the connectivity graph by iterating through all the components of each slice, which gives us at most $O(s \, q)$ iterations. Then for each pair of overlapping slices we check whether their path-connected components intersect. Assume that each slice has at most $t$ faces in its alpha-complex representation. Given our partition of $SO(2)$, each slice has 2 neighbours, so the overall complexity of the edges construction is at most $O(2 \, s \, t^2)$. Note that this is a conservative estimation. In our current implementation, we check whether two components from different slices intersect by iterating over all the triangles in their triangulations, which makes the complexity of this procedure quadratic on the number of triangles. As a consequence, this step takes the majority of the overall computation time of the algorithm. However, this can be significantly improved using, for instance, quadtrees.

Finally, we compute connected components of the connectivity graph, The complexity of this procedure is $O(s \, q^2)$,  since each slice has 2 neighbours, and hence each vertex of the connectivity graph has at most $2 \, q$ adjacent vertices.

This gives us the overall complexity $O(s \, (n \,  m)\, \log(n \, m)) + O(s \, q) + O(2 \, s \, t^2) + O(s \, q^2)$. In practice, the number of connected components per slice is small unless the original space has a lot of path-connected components. So, the most computationally expensive parts are slices construction  -- $O(s \, (n \, m)\, \log(n \, m))$, and the connectivity graph edges construction -- $O(2\, s \, t^2)$. Note that both these parts can be parallelized. Each slice approximation can be computed separately, because the slices do not depend on each other. For each pair of adjacent slices we can compute the intersections between their connected components independently. 

In Sec.~\ref{our_approach} we said that the partition of $SO(2)$ is constructed in such a way that the distance $\Delta \phi$ between two consecutive orientation samples should be chosen such that the maximal displacement does not exceed the chosen $\varepsilon$. Note that $\varepsilon$ should be less than the radius $r$ of the smallest ball in the object's representation in order to preserve the shape of the object. On the other hand, Fig.~\ref{plot} shows that the number $s$ grows significantly when we decrease the $\varepsilon$. Finally, the possible choice of $\varepsilon$ depends on the shape of the object: if the shape is easy to approximate with a small number of large balls, then we can choose a large $\varepsilon$ without losing any crucial information about the shape. In contrast, if the object has a lot of thin parts, it has to be approximated with a larger number of smaller balls. In this case we have to choose a smaller $\varepsilon$, which increases the necessary number of slices.

\begin{figure}[htb!]   

\center{\includegraphics[width=0.4\textwidth]{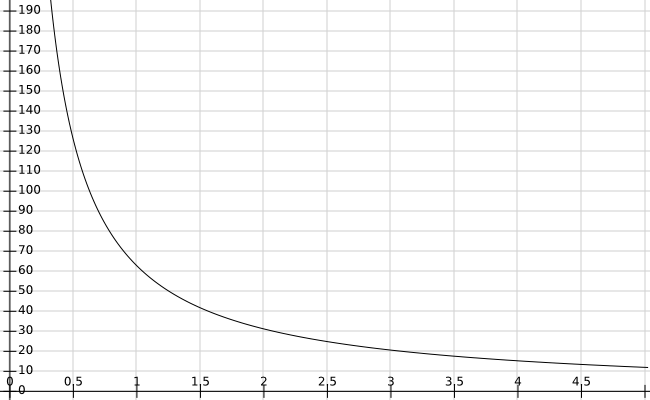}}
  \caption{This plot shows how the number of slices (Y-axis) depends on the $\varepsilon$ (X-axis) given an object of diameter 5.}
  \label{plot} 
  \vspace*{-.4cm}
\end{figure}

\section{Experiments and Results}
\label{experiments}

\begin{figure}
  \subcaptionbox{Workspace}[.45\linewidth]{%
    \includegraphics[width=.45\linewidth]{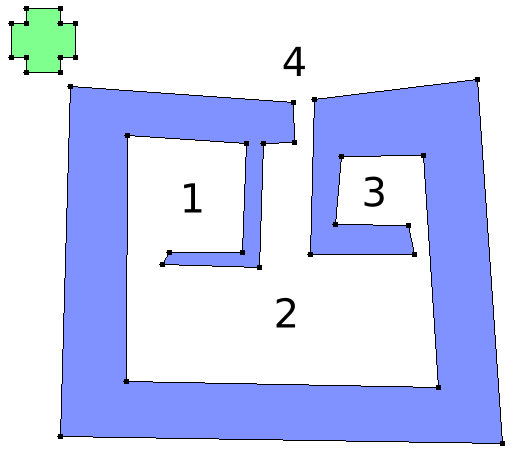}}\quad
  \subcaptionbox{Slice for $\phi = 0.1$}[.45\linewidth]{%
    \includegraphics[width=.45\linewidth]{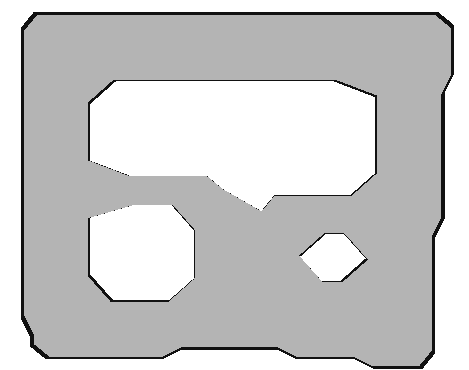}}
      \subcaptionbox{Slice for $\phi = 3.7$}[.45\linewidth]{%
    \includegraphics[width=.45\linewidth]{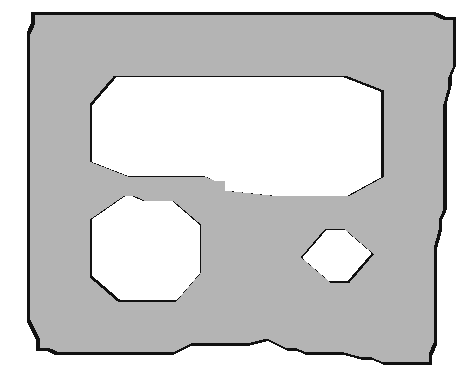}}\quad
          \subcaptionbox{Slice for $\phi = 6.2$}[.45\linewidth]{%
    \includegraphics[width=.45\linewidth]{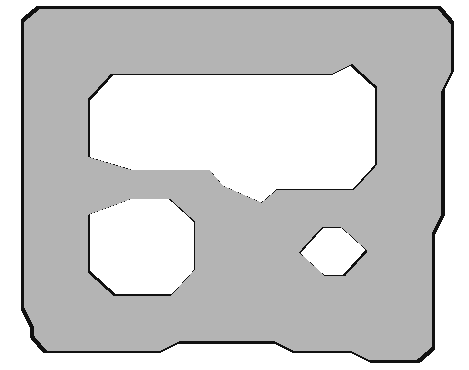}}
      \caption{On the first figure the obstacle is depicted in blue, and the object is depicted in green. There are 3 narrow passages separating 4 potential disjoint path-connected components: 1, 2, 3, and 4. The fourth component is unbounded. Figures 2, 3, and 4 depict alpha-complex approximations of different slices visualized in MeshLab. }
      \label{fig::workspace}
\end{figure}

\begin{figure*}[t]
  \centering
  \subcaptionbox{Radius = 15, 354 balls}[.3\linewidth][c]{%
    \includegraphics[width=.3\linewidth]{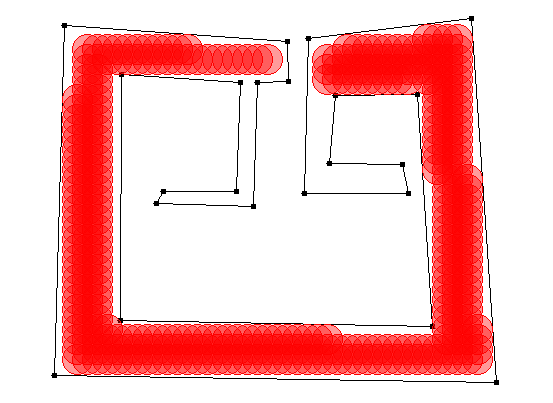}}\quad
  \subcaptionbox{Radius = 10, 492 balls}[.3\linewidth][c]{%
    \includegraphics[width=.3\linewidth]{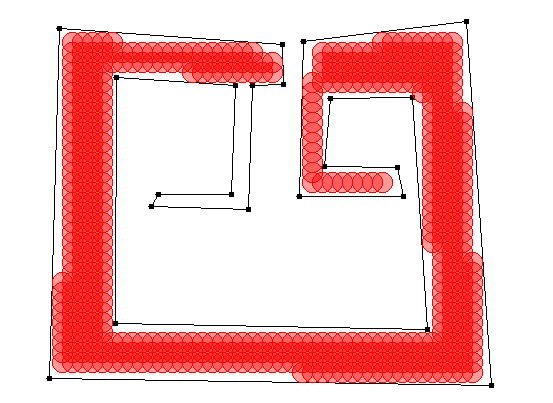}}\quad
  \subcaptionbox{Radius = 4, 681 balls}[.3\linewidth][c]{%
    \includegraphics[width=.3\linewidth]{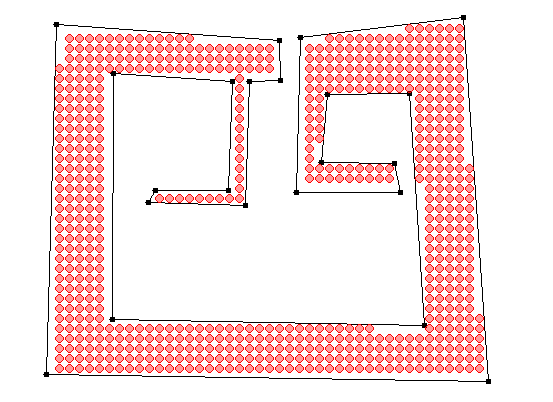}}
    \caption{We approximate the obstacles by sets of balls of radius 15, 10, and 4, respectively. Note that the first approximation significantly simplifies the shape, and has only one narrow passage; the second approximation preserves the shape better and has two narrow passages; the third approximation preserves all the important shape features of the obstacles.}
    \vspace*{-.2cm}	
    \label{fig::exp_1}
\end{figure*}

We consider a simple environment, see Fig.~\ref{fig::workspace}. For our experiments, we generate a workspace and an object as  polygons, and approximate them with unions of balls of equal radii lying strictly inside the polygons. Note that the choice of the radius is important: when it is small, we get more balls, which increases the computation time of our algorithm; on the other hand, when the radius is too large, we lose some important information about the shape of the obstacles, because thin parts cannot be approximated by large balls, see Fig.~\ref{fig::exp_1}.

We use CGAL library to compute Delaunay triangulation and alpha complexes. Our experiments were run on an Intel Core i7 laptop with 12 Gb RAM.

We consider a simple object whose approximation consists of 5 balls. We run our algorithm for all the 3 approximations of the workspace, and take 5 different values of $\varepsilon$, see Table~\ref{time}. We can observe that as we increase the $\varepsilon$ the computation time decreases. This happens because we are using fewer slices. However, we can also observe that when the $\varepsilon$ is too large, our approximation of the collision space becomes too small, and we are not able to find one connected component (see the last column of the table).

\begin{table}[htb!]
\captionsetup{font=small}
\vspace*{-.3cm}	
\small
\begin{tabular}{| c | c | c | c |}
  \hline			
    & $R = 15$ & $R = 10$ & $R = 4$   \\

  \hline
   $\varepsilon = 0.30  \cdot r$ & 2 c.; 741 ms & 3 c.;  1287 ms & 4 c.; 1354 ms  \\
   $\varepsilon = 0.33  \cdot r$ & 2 c.; 688 ms & 3 c.; 1208 ms &  4 c.; 1363 ms \\
   $\varepsilon = 0.37 \cdot r$ & 2 c.; 647 ms & 3 c.; 1079 ms & 4 c.; 1287 ms  \\
   $\varepsilon = 0.40 \cdot r$ & 2 c.; 571 ms & 3 c.; 986 ms &  3 c.; 1156 ms \\
   $\varepsilon = 0.43 \cdot r$ & 2 c.; 554 ms & 3 c.;  950 ms & 3 c.; 1203 ms \\
  \hline  
\end{tabular}

\caption{We run our experiments for 5 values of $\varepsilon$ and 3 workspace approximations. We report the number of path-connected components we found and the computation time for each case.}
\label{time}
\vspace*{-.1cm}
\end{table}

When we were using our first approximation of the workspace, we were able to distinguish only between components 4 and 2 (see Fig.~\ref{fig::exp_1}), and therefore prove path non-existence between them. For a more accurate approximation, we were also able to detect component 3. Finally, the third approximation of the workspace allows us to prove path non-existence between every pair of the four components. The accuracy of a workspace approximation depends on the task: for instance, if the only thing we need to know is whether the object can escape arbitrarily far from the obstacles, then it is enough to use a rough approximation. 

\begin{table}[htb!]
\centering
\captionsetup{font=small}
\vspace*{-.3cm}	
\small
\begin{tabular}{| c | c | c | c |}
  \hline			
    & Component 1 & Component 2 & Component 3   \\

  \hline
   $\varepsilon = 0.30  \cdot r$ & 68676 & 216685 & 18893  \\
   $\varepsilon = 0.33  \cdot r$ & 70527 & 223110  &  19934 \\
   $\varepsilon = 0.37 \cdot r$ & 71484 & 229040 & 20354  \\
   $\varepsilon = 0.40 \cdot r$ & -- & 307839 &  20841 \\
   $\varepsilon = 0.43 \cdot r$ & -- & 316281 & 21092 \\
  \hline  
\end{tabular}

\caption{This table reports the volume of components 1, 2, and 3. Component 4 is infinite. We see that when $\varepsilon$ is too large, our approximation of the collision space becomes too conservative, and the first 2 components merge.}
\label{volume}
\vspace*{-.4cm}

\end{table}

We have also computed an estimation of the volume of the 3 bounded components, see Table~\ref{volume}. For that, we used the most accurate approximation of the workspace, and 5 different values of $\varepsilon$. We can see that when we increase $\varepsilon$, the size of the components increases, because larger values of $\varepsilon$ provide more conservative (larger) approximations of the free space. For the last 2 values of $\varepsilon$ components 1 and 2 merge, and hence the volume of component 2 increases significantly.

\section{Conclusion}
In this paper, we propose an approach towards proving caging and path non-existence for rigid objects in 2D workspaces. We compute an approximation of the collision space of the object, represent it as a collection of lower dimensional projections, and analyze path-connectivity of the free space of the object. Apart from that, we estimate the volume of path-connected components and the width of narrow passages. We perform an experimental evaluation of our approach and show that our algorithm is correct and $\delta-$complete.

\end{document}